\theoremstyle{plain}
\newtheorem{theorem}{Theorem}[section]
\theoremstyle{definition}
\theoremstyle{remark}
\title{Preference as Reward, \\ Maximum Preference Optimization with Importance Sampling}
\author{
	%\href{https://orcid.org/0000-0000-0000-0000}{
%\includegraphics[scale=0.06]{orcid.pdf}
\hspace{1mm}Zaifan Jiang\thanks{Corresponding author} \\
\texttt{jiangzaifan@shizhuang-inc.com} \\
%% examples of more authors
\And
%\href{https://orcid.org/0000-0000-0000-0000}{
%\includegraphics[scale=0.06]{orcid.pdf}
\hspace{1mm}Xing Huang \\
\texttt{huangxing1231@shizhuang-inc.com} \\
\And
%\href{https://orcid.org/0000-0000-0000-0000}{
%\includegraphics[scale=0.06]{orcid.pdf}
\hspace{1mm}Chao Wei \\
\texttt{weichao@shizhuang-inc.com} \\	%% \AND
	%% Coauthor \\
	%% Affiliation \\
	%% Address \\
	%% \texttt{email} \\
	%% \And
	%% Coauthor \\
	%% Affiliation \\
	%% Address \\
	%% \texttt{email} \\
	%% \And
	%% Coauthor \\
	%% Affiliation \\
	%% Address \\
	%% \texttt{email} \\
}
\begin{document}
\maketitle

\begin{abstract}
Preference learning is a key technology for aligning language models with human values.
Reinforcement Learning from Human Feedback (RLHF) is a model-based algorithm to optimize preference learning,
which first fits a reward model for preference scores and then optimizes the generating policy with an on-policy PPO algorithm to maximize the reward.
The processing of RLHF is complex, time-consuming, and unstable.
The Direct Preference Optimization (DPO) algorithm uses an off-policy algorithm to directly optimize the generating policy and eliminates the need for a reward model. DPO is more data-efficient and stable. However, DPO has a drawback of overfitting to the preference data and ignoring the KL-regularization term when the preference is deterministic.
Identity mapping Preference Optimization(IPO) uses a root-finding MSE loss to incorporate KL-regularization. However, both DPO and IPO fail to properly address the KL-regularization term because the support of the preference distribution is not equal to the reference distribution.
In this paper, we propose a simple and intuitive off-policy preference optimization algorithm from an importance sampling view, which we call Maximum Preference Optimization (MPO). MPO incorporates the off-policy KL-regularization term, making regularization truly effective. MPO achieves the best of both worlds by combining the objectives of RLHF and IPO while being an off-policy algorithm. Furthermore, MPO eliminates the need for a reward model and reference policy, simplifying the learning process and reducing memory usage.
\end{abstract}

% keywords can be removed
\keywords{Large Language Model \and Preference Learning \and Reinforcement Learning}

\section{Introduction}
Large language models (LLMs) \cite{brown2020language} \cite{chowdhery2023palm} \cite{bubeck2023sparks} \cite{radford2019language} with massive scale parameters trained on a large amount of data using pretrain,
supervised fine-tune (SFT) \cite{wei2021finetuned,narayanan2021efficient,sanh2021multitask}, and instruction fine-tune (IFT) \cite{chung2022scaling,mishra2021cross,thoppilan2022lamda} algorithms 
have lead to surprising capabilities like few-shot in context learning.
The training dataset comes from variety of areas and has different qualities,
and the training algorithms (pretrain, SFT, IFT) all rely on maximum likelihood estimation (MLE) which learn to match the distribution 
of the training dataset.
LLMs trained on these data using the MLE algorithm generate contents with a quality gap compared to human judgement or values.

Preference learning \cite{ziegler2019fine} \cite{bai2022training} \cite{christiano2017deep} \cite{stiennon2020learning} algorithms 
significantly improve the generating quality to align with human values.
After pairs of generations under the same context are collected, the pairwise human preference
is labeled to indicate which generation is better. 
Then, a preference learning algorithm is used to optimize the generating policy to align with human values.
While relative human judgements are easier to collect than expert-labeled data, subsequent works use
the preference learning algorithm to improve proficiency\cite{kreutzer2018reliability,stiennon2020learning,ziegler2019fine} and instruction following\cite{ouyang2022training,ramamurthy2022reinforcement}.

Reinforcement learning from human (or AI) feedback (RLHF/RLAIF)\cite{ouyang2022training,bai2022constitutional} use
reward model-based reinforcement learning algorithm to learn the optimal policy.
It first learns a reward model from the preference data,
then uses an on-policy PPO \cite{schulman2017proximal} algorithm to maximize the learned reward.
The reward is learned with the Bradley-Terry model \cite{bradley1952rank,bong2022generalized},
which assumes the preference score can be approximated by substituted with point-wise rewards.
This assumption may lead to an approximation error when preference is deterministic.
The PPO algorithm is used on data sampled from the generating policy,
which may have a different support or distribution drift from preference data.
The learned reward model inference on the out-of-distribution data may reduce the accuracy.
The process of RLHF involves training a reward model and searching for a policy using on-policy PPO algorithm which is complex, time-consuming, and unstable.

Direct preference optimization (DPO)\cite{rafailov2023direct} combines an off-policy algorithm and the Bradley-Terry model 
to directly learns the generating policy from preference data.
The off-policy algorithm is based on KL-regularization reward maximization from the off-RL
\cite{levine2020offline,schulman2017equivalence,nachum2017bridging} community,
which is data efficient, stable and eliminates the need for a reward model.
% DPO uses the implicit reward from off-policy algorithm to optimize the Bradley-Terry model,
When preferences are deterministic, which occurs in most cases, the reward of the Bradley-Terry model is undefined,
which leads to ignoring the KL-regularization term and over-fitting the preference dataset.

Identity mapping preference optimization (IPO)\cite{azar2023general} 
is another off-policy algorithm that incorporates KL-regularization to learn the generating policy from preference data.
% uses an off-policy algorithm with KL-regularization to learn the generating policy from preference data.
It uses root finding mean square error(MSE) loss 
to maximize the probability of preferences while considering KL-regularization.
% It learns to maximize preference probability under KL-regularization,
% which change DPO's preference matching optimization to preference probability maximization,
% and use root finding mean square error(MSE) loss to solve the  maximization problem and
% fix the KL-regularization ignorance problem.
% But it doesn't know what does preference probability maximization actually learns theoretically,
% But because the support of preference data distribution is different from reference policy distribution,
% both DPO's and IPO's KL-regularization term fail.
However, both DPO and IPO fail to properly account for the KL-regularization term due to the mismatch between the support of the preference data distribution and the reference policy distribution.

In this paper, 
we design a simple and intuitive off-policy maximum preference optimization (MPO) algorithm from an importance
sampling view and incorporate an off-policy KL-regularization term to make the KL-regularization truly
effective.
Our key contributions of this paper can be summarized as follows:
\begin{itemize}
    \item formalize preference learning as a preference/reward maximization problem, and design a simple and intuitive off-policy algorithm from importance sampling view
    \item figure out KL-regularization fails when optimized on preference data, and design an off-policy sample loss to make KL-regularation truly effective
	\item eliminate the reward substitution assumption and out-of-distribution generalization assumption
	\item eliminate the needs for both reward model and reference policy to reduce memory usage
\end{itemize}

% \begin{figure}[ht]
% \vskip 0.2in
% \begin{center}
% \centerline{\includegraphics[width=\columnwidth]{process.jpg}}
% \caption{The Maximum Preference Optimization (MPO) algorithm directly maximizes preferences on preference data using an  off-policy approach.
% It incorporates offline SFT, pretrain data to make KL-regularization truly effective, which also eliminates the need for both a reward model and a reference policy.}
% \label{icml-historical}
% \end{center}
% \vskip -0.2in
% \end{figure}

\begin{figure*}
	\centering
\includegraphics[width=.9\textwidth]{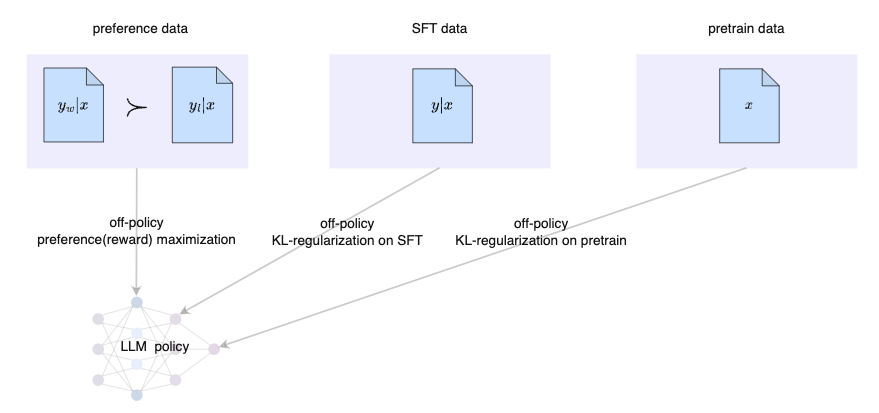}
\caption{Maximum Preference Optimization (MPO) direct optimize preference maximization on preference data using off-policy algorithm,
and use offline SFT, pretrain data to make KL-regularation truly effective, which also eliminate the needs for both reward model and reference policy.}
\end{figure*}

\section{Preliminaries}
The main pipeline of preference learning usually consists of three phases: 1) pretraining and supervised fine-tuning (SFT), where SFT is not a must; 2) preference data collection;
3) reinforcement-learning optimization.

\paragraph{Pretraining and SFT phase}
Preference learning typically starts with a pretrained LLM or a LLM fine-tuned on high quality data using maximum likelihood estimation.
We define the final policy after this phase as $\pi_{\text{ref}}$,
and the data to train $\pi_{\text{ref}}$ as $\mathcal{D}_\text{ref}$. So,
\begin{equation} \label{ref policy}
  \pi_{\text{ref}}\approx\arg\max_{\pi}\mathbb{E}_{x,y\sim\mathcal{D}_\text{ref}}\log \pi(y|x)
\end{equation}

\paragraph{Preference data collection phase}
After pretraining and SFT phase, $\pi_{\text{ref}}$ is prompted by context $x$, 
and generates two responses $y_w, y_l \sim \pi_\textnormal{ref}(\cdot|x)$.
Then $x,y_w,y_l$ is labeled by humans to judge which response is preferred. Denote $y_w\succ y_l | x$ if $y_w$ is preferred,
and $y_l\succ y_w|x$ if $y_l$ is preferred. We define a new symbol $I=\mathbb{I}[y_w\succ y_l|x]$, and all $\langle x,y_w,y_l,I \rangle$ constitute the preference dataset $\mathcal{D}^p$:
\begin{equation}
  \langle x,y_w,y_l,I\rangle\sim\mathcal{D}^p.
\end{equation}
We also define $\rho$ as the context distribution of $x$ and $\mu$ as the preference pair distribution given context $x$ from preference data distribution
\begin{equation}\label{rho}
  x\sim\rho,
\end{equation}
\begin{equation}\label{mu}
  \langle y_w,y_l \rangle\sim\mu(\cdot|x).
\end{equation}
\begin{equation}\label{pstar}
  I \sim \text{Bernoulli}(p^*(y_w\succ y_l|x)),
\end{equation}
where $p^*(y_w\succ y_l|x)$ denotes the preference probability of $y_w \succ y_l$ given context $x$.
% Then the expected preference of $y_w$ over $\mu$, noted $p^*(y\succ \mu|x)$, via the following equation:
% \begin{equation}
% p^*(y\succ\mu|x)=\underset{y_l\sim \mu(\cdot|x)}{\mathbb{E}}[p^*(y_w\succ y_l|x)].
% \end{equation}
% For any policy $\pi$, denote the total preference of policy $\pi$ to $\mu$ as
% \begin{equation}
% p^*(\pi\succ \mu)	= \underset{x\sim\rho \atop y_\sim \pi(\cdot|x)}{\mathbb{E}}[p^*(y\succ \mu|x)].
% \end{equation}

\paragraph{Reinforcement-learning optimization phase}
% At the final phase, the prevailing method is to use a reinforcement learning algorithm to learn an explicit or implicit
% reward from the preference data, and then to use an on-policy or off-policy policy gradient algorithm to maximize the reward.
% Recently, some methods have derived the optimal target distribution using reward maximization under KL-regularization and 
% then design a loss using empirical data with optimal target distribution as its solution.
In the final phase, the prevailing approach involves using a reinforcement learning algorithm to learn an explicit or implicit reward from preference data. Subsequently, an on-policy or off-policy policy gradient algorithm is used to maximize this reward.

Recently, some methods have taken a different approach by deriving the optimal target distribution through reward maximization under KL-regularization. They design a loss function using empirical data, with the optimal target distribution serving as the solution to this loss function.

In this paper, we introduce a novel pair-wise preference reward $r^p(y_w\succ y_l|x)=p^*(y_w\succ y_l|x)$, 
and design an algorithm to directly optimizes the preference(reward) maximization objective.

\paragraph{Local distribution introduced by a preference pair $\langle x, y_w, y_l\rangle$}
Given policy $\pi$, there is a global distribution $\pi(\cdot | x)$. 
% In the paper, we also use $\pi^s=\pi$ by context. 
Conditioned by $y_w,y_l$, we introduce a local distribution $\pi^p$:
\begin{equation}\label{pair:policy}
    \begin{aligned}
    \pi^p(y_w\succ y_l|x,y_w,y_l)
    &=\frac{\pi(y_w|x)}{\pi(y_w|x)+\pi(y_l|x)}\\
    &=\sigma(\log \pi(y_w|x) -\log\pi(y_l|x)).
    \end{aligned}
\end{equation}
In order to simplify notation, we also use
$$
\pi^p(y_w|x)=\pi^p(y_w\succ y_l|x,y_w,y_l).
$$

\section{Background}
\subsection{Reinforcement Learning from Human Feedback (RLHF)}

The RLHF uses reward model-based reinforcement learning algorithm to learn preferences from human feedback.
It consists two main steps:
1) reward estimation from preference data and 2) reward maximization using the PPO algorithm.

\paragraph{Reward estimation from preference data}
In previous research, the point-wise reward is learned with the Bradley-Terry model. Given context x,
we define $r^*(x,y)$ as the reward of generating $y$. 
The Bradley-Terry model assumes the preference probability $p^*(y_w\succ y_l|x)$ to be:
\begin{equation}
    \begin{aligned}
  p^*(y_w\succ y_l|x) &= \frac{\exp(r^*(x,y_w)}{\exp(r^*(x,y_w)) + \exp(r^*(x,y_l))}\\
  &=\sigma(r^*(x,y_w)-r^*(x,y_l))
    \end{aligned}
  \label{BT model}
\end{equation}
where $\sigma(\cdot)$ is the sigmoid function.

RLHF uses \ref{BT model} to model the point-wise reward, and optimize log loss to estimate the reward.
The estimated reward is defined as parameterized $r_\phi$, and the loss function is defined as:
\begin{equation}
    \begin{aligned}
  \mathcal{L}(r_\phi)=&-\underset{\langle x,y_w,y_l,I\rangle\sim \mathcal{D}^p}{\mathbb{E}}[
    I\cdot\sigma(r_\phi(x,y_w) - r_\phi(x,y_l)) \\
    &+ (1 - I)\cdot\sigma(r_\phi(x,y_l)-r_\phi(x,y_w))].
    \end{aligned}
    \label{reward loss}
\end{equation}
% where $I=\mathbb{I}[y_w\succ y_l|x]$.

The loss \ref{reward loss} is used for maximum likelihood estimation,
and the estimated reward $r_\phi$ is used to
approximate the probability $p^*(y_w\succ y_l|x)$ from the preference data $\mathcal{D}^p$.

\paragraph{Reward maximization using PPO algorithm} 
The reward-maximization or KL-regularized reward-maximization objective is used for reinforcement learning based policy optimization:
\begin{equation}
  \max_{\pi_\theta }\underset{{\substack{x\sim\rho\\y\sim \pi_\theta(\cdot|x)}}}{\mathbb{E}}[r_\phi(x,y)]-\beta \mathbb{D}_{\text{KL}}[\pi_\theta||\pi_{\text{ref}}]
  \label{reg rl}
\end{equation}
where $\mathbb{D}_{\text{KL}}$ is the $\text{KL}$-divergence and $\beta$ is the regularization weight. 
This objective is optimized by on-policy REINFORCE \cite{mnih2016asynchronous} or PPO\cite{schulman2017proximal} algorithm.

The second phase of RLHF involves optimizing objective \ref{reg rl} using $r_\phi$ learned from \ref{reward loss}.
PPO is an on-policy algorithm which continues to collect sample from the current policy $\pi_\theta$ and collects reward from the estimated reward model.
It then uses the collected data to estimate the gradient of \ref{reg rl}, and then updates the current policy.
Because $\pi_\theta$ is different from $\pi_{\text{ref}}$ defined as \ref{ref policy},
samples generated by $\pi_\theta$ has a different distribution of $\mathcal{D}^p$. So, RLHF assumes
$r_\phi$ can generalize to out-of-distribution samples generated by $\pi_\theta$.

% RLHF needs to learn a reward model, and maximize the learned reward using on-policy PPO algorithm, this process is complex,
% time-consuming, and unstable.

RLHF also mixes a pretraining gradient into the PPO objective, in order to fix the performance regression
on public NLP datasets. And RLHF call the final objective 'PPO-ptx'. Define $\mathcal{D}_\text{pretrain}$ as the pretraining dataset, then the combined objective is defined as:
\begin{equation}\label{ppo-ptx}
    \begin{aligned}
	\max_{\pi_\theta}
    &\underset{\substack{x\sim\rho\\y \sim \pi_\theta(\cdot|x)}}{\mathbb{E}}\bigl[
		r_\phi(x,y)
	]
	-\beta \log(\frac{\pi_\theta(y|x)}{\pi_\text{ref}(y|x)})\bigr]\\
	&+\gamma \underset{\mathcal{D}_\text{pretrain}}{\mathbb{E}}[\pi_\theta(x)].
    \end{aligned}
\end{equation}

\subsection{Direct Preference Optimization (DPO)}
An alternative RL algorithm to preference learning is direct preference optimization (DPO), 
which eliminates the training of a reward model.
Following prior work \cite{rafailov2023direct,nachum2017bridging,schulman2017equivalence}, it is straightforward to show that the optimal solution $\pi_r$ of \ref{reg rl} for reward $r(x,y)$ takes the form:
\begin{equation}
  \pi_r(y|x)\propto \pi_{\text{ref}}(y|x)\exp(\frac{1}{\beta}r(x,y)).
  \label{reg policy}
\end{equation}

Given an optimal policy $\pi^*$ of \ref{reg rl}, DPO derives an implicit reward from Eq. \ref{reg policy}:
\begin{equation}
  r^*(x,y)=\beta\log\frac{\pi^*(y|x)}{\pi_{\text{ref}}(y|x)}+\beta\log Z(x),
  \label{reg reward}
\end{equation}
where $Z(x)=\sum_y\pi_{\text{ref}}(y|x)\exp(\frac{1}{\beta}r^i(x,y))$ is the partition function. 
Substituting the re-parameterized reward from \ref{reg reward} into the Bradley-Terry model \ref{BT model}:
\begin{equation} \label{pair score}
\begin{split}
p^*(y_w\succ y_l|x)&=\sigma(r^*(x,y_w)-r^*(x,y_l))\\
&=\frac{1}{1 + \exp\big(
r^*(x,y_w) - r^*(x,y_l)
	% \beta\log\frac{}{\pi_{\sriptsize\textnormal{ref}}}
% -{}{(y_l|x)})}
\big)}
\end{split}.
\end{equation}
% where $h_{\pi_\theta}(x, y_w, y_l)$ is defined as:
% \begin{equation}
% \log\frac{\pi_\theta(y_w|x)\pi_\text{ref}(y_l|x)}{\pi_\text{ref}(y_w|x)\pi_\theta(y_l|x)}
% \end{equation}

Substituting the probability \ref{pair score} to the log loss \ref{reward loss},
DPO formulates a maximum likelihood objective for the parameterized policy $\pi_\theta$ with the empirical preference
dataset $\mathcal{D}^p$:
\begin{equation}\label{loss:dpo}
  \begin{aligned}
  \mathcal{L}_{\text{DPO}}(\pi_\theta;\pi_{\text{ref}})=&
  -\underset{\langle x,y_w,y_l,I\rangle\sim \mathcal{D}^p}{\mathbb{E}}\bigl[\\
    I&\log \sigma(
    r_\theta(x,y_w) - r_\theta(x,y_l)
)\\
+ (1-I)&
\log \sigma(
    r_\theta(x,y_l) - r_\theta(x,y_w)
)
  \bigr],
  \end{aligned}
\end{equation}
where $r_\theta(x,y)=\beta\log\frac{\pi_\theta(y|x)}{\pi_\text{ref}(y|x)}$ is the reward implicitly defined by $\pi_\theta$
and $\pi_\text{ref}$.
%In its population form, the loss takes on the form
%\begin{equation}
%  \begin{split}
%    -\underset{x\sim\rho\atop y_w, y_l,I\sim \mu(\cdot|x)}{\mathbb{E}}
%    \big[
%      p^*(y_w\succ y_l|x)&\log\sigma(\beta\log\frac{\pi_\theta(y_w|x)}{
%\pi_{\text{ref}}(y_w|x)
%}
%-
%\beta\log\frac{\pi_\theta(y_l|x)}{
%\pi_{\text{ref}}(y_l|x)
%})
%\\
%+(p^*(y_l\succ y_w|x))&\log\sigma(\beta\log\frac{\pi_\theta(y_l|x)}{
%\pi_{\text{ref}}(y_l|x)
%}
%-
%\beta\log\frac{\pi_\theta(y_w|x)}{
%\pi_{\text{ref}}(y_w|x)
%})
%)
%      \big].
%  \end{split}
%\end{equation}
% The corresponding objective of  \ref{loss:dpo} is
% \begin{equation}\label{obj:dpo}
%     \begin{aligned}
%     \max_{\pi_\theta}\underset{\substack{x\sim \rho\\ y_w,y_l\sim \mu}}{\mathbb{E}}&\Bigl[
%         \pi_\theta(y_w|x)r(x,y_w)+\pi_\theta(y_l|x)r(x,y_l)\\
%         &-\beta\mathbb{D}_\text{KL}[\pi_\theta||\pi_\text{ref}]
%         \Bigr].
%     \end{aligned}
% \end{equation}

% From population form, the $\mathcal{L}_{\text{DPO}}$ loss trys to approximate the preference probability $p^*(y_w\succ y_l | x)$,
Although this pair-wise loss eliminates the need to calculate the partition $Z(x)$, 
it also makes the optimal solution $\pi^*_\theta$ undefined when there are not enough constraints.
For example, if weight $\pi_\theta(y_w|x)$ and $\pi_\theta(y_l|x)$ with the same multiplier $M$, the logits of the sigmoid function $\sigma$ 
will remain the same:
\begin{equation}\label{non-unique}
    \begin{aligned}
&\beta\log\frac{\pi_\theta(y_w|x) * M}{
\pi_{\text{ref}}(y_w|x)
}
-
\beta\log\frac{\pi_\theta(y_l|x) * M}{
\pi_{\text{ref}}(y_l|x)
} \\
&= \beta\log\frac{\pi_\theta(y_w|x)}{
\pi_{\text{ref}}(y_w|x)
}
-
\beta\log\frac{\pi_\theta(y_l|x)}{
\pi_{\text{ref}}(y_l|x)
}.
\end{aligned}
\end{equation}
This makes the final learned policy $\pi_\theta$ suboptimal, and also fails the KL-regularation term.

\subsection{$\Psi$-PO with identity mapping (IPO)}

IPO defines a new objective called $\Psi$-preference optimization objective ($\Psi$PO):
\begin{equation}
	\max\limits_{\pi_\theta}\underset{\substack{x\sim\rho \\ y_w\sim \pi_\theta(\cdot|x) \\ y_l\sim \mu(\cdot|x)}}{\mathbb{E}}
	[\Psi(p^*(y_w\succ y_l|x))] - \tau \mathbb{D}_{\text{KL}}[\pi_\theta||\pi_{\text{ref}}],
\end{equation}
where $\Psi$ is a general non-decreasing function $\Psi: [0, 1]\rightarrow \mathbb{R}$.

Take $\Psi$ to be the identity mapping,
% leading to direct regularized optimization of total preferences:
% \begin{equation}\label{IPO objective}
% \max\limits_{\pi_\theta} p^*(\pi\succ\mu)-\tau \mathbb{D}_{\text{KL}}[\pi_\theta||\pi_{\text{ref}}].
% \end{equation}
% To optimize the objective,
IPO derives an off-policy loss on empirical dataset:
\begin{equation}\label{loss:ipo}
    \begin{aligned}
	\min_{\pi_\theta}\underset{\langle x,y_w,y_l, I\rangle \sim\mathcal{D}^p}{\mathbb{E}}
	\bigl[
	&I h_{\pi_\theta}(x, y_w, y_l) - (1 - I)h_{\pi_\theta}(x, y_l, y_w) \\
    &- \frac{\tau^{-1}}{2}
		\bigr]^2,
    \end{aligned}
\end{equation}
where $h_{\pi_\theta}(x,y_w,y_l)$ is defined as:
\begin{equation}\label{hpi}
h_{\pi_\theta}(x,y_w,y_l)=\log\frac{\pi_\theta(y_w|x)}{\pi_\text{ref}(y_w|x)}-\log\frac{\pi_\theta(y_l|x)}{
    \pi_\text{ref}(y_l|x)
}.
\end{equation}
% The corresponding objective of \ref{loss:ipo} is
% \begin{equation}\label{obj:ipo}
%     \begin{aligned}
%     \max_{\pi_\theta}\underset{\substack{x\sim \rho\\ y_w,y_l\sim \mu}}{\mathbb{E}}&\Bigl[
%         \pi_\theta(y_w|x)p^*(y_w\succ y_l|x)\\
%         &+\pi_\theta(y_l|x)p^*(y_l\succ y_w|x)\\
%         &-\beta\mathbb{D}_\text{KL}[\pi_\theta||\pi_\text{ref}]
%         \Bigr].
%     \end{aligned}
% \end{equation}

IPO claims when preferences are deterministic or near deterministic, DPO will lead over-fitting to the preference 
dataset at the expense of ignoring the KL-regularation term. And IPO's loss will always regularizes $\pi_\theta$ towards
$\pi_{\text{ref}}$ by controlling the gap between the log-likelihood ratios $\log\frac{\pi_\theta(y_w|x)}{\pi_\theta(y_l|x)}$
and $\log\frac{\pi_{\text{ref}}(y_w|x)}{\pi_{\text{ref}}(y_l|x)}$.

Similar to DPO, the IPO loss controls the ratio of $\frac{\pi_\theta(y_w|x)}{\pi_\theta(y_l|x)}$ not too far
from $\frac{\pi_\text{ref}(y_w|x)}{\pi_\text{ref}(y_l|x)}$, but doesn't control $\pi_\theta(y|x)$ not too far from $\pi_\text{ref}(y|x)$.
When there are not enough constraints, which is almost always the case, the optimal policy is not unique, so the KL-regularation term also fails.

\section{Method}
In this work, we combine the preference maximization term of IPO's loss and the regularization term of RLHF's loss.
Unlike IPO, which derives the off-policy from preference maximization under KL-regularization, we formulate 
preference maximization as a reward maximization problem in the reinforcement learning setting, and derive an 
off-policy objective from an importance sampling view and without the help of 
KL-regularation. Then we combine the off-policy reward maximization objective with forward KL-regularization term, which makes the KL-regularization truly effective. We call the algorithm Maximum 
Preference Optimization (MPO).
The final objective of MPO bears resemblance to RLHF's objective, and likes IPO and DPO, MPO is off-policy.

\subsection{Preference(reward) Maximization with Importance Sampling}
We define preference as reward and formalize preference maximization as a reward maximization problem in 
the reinforcement learning setting.
We define $x,y_w, y_l$ as state, and $\mathcal{A}_{x,y_w,y_l}=\{y_w\succ y_l, y_l \succ y_w\}$ as the action set.
To simplify notation, let $\mathcal{A}=\{y_w,y_l\}$, where $y_w$ represents $y_w\succ y_l$, and
$y_l$ represents $y_l\succ y_w$.
For an action $\mathfrak{a}\in\mathcal{A}$,
define the reward of action $\mathfrak{a}$ as $r^p(\mathfrak{a}|x,y_w,y_l)$, which is the preference probability.
By simplifying $r^p(\mathfrak{a}|x,y_w,y_l)$ as $r^p(\mathfrak{a}|x)$, we get:
\begin{equation}\label{preference:reward}
	\begin{split}
	r^p(\mathfrak{a}|x)&=r^p(\mathfrak{a} | x, y_w, y_l)\\
	&=\mathbb{E}[\mathbb{I}\{\mathfrak{a}\}|x]\\
	&=p^*(\mathfrak{a}|x).
	\end{split}
\end{equation}

Given a sample $x, y_w, y_l, I \in \mathcal{D}^p$, we can get rewards from both actions in $\mathcal{A}$:
\begin{equation}
    \begin{aligned}
	\{\langle x,y_w,y_l, I\rangle\} \rightarrow \{
	&\langle \underbrace{(x,y_w, y_l)}_{\text{state}}, \underbrace{(y_w \succ y_l)}_{\text{action}}, \underbrace{(I)}_{\text{reward}}\rangle,\\
	&\langle \underbrace{(x,y_w, y_l)}_{\text{state}}, \underbrace{(y_l \succ y_w)}_{\text{action}}, \underbrace{(1-I)}_{\text{reward}}\rangle
	\}.
    \end{aligned}
\end{equation}
% and define the converted dataset as $\Bar{\mathcal{D}}^p$.
Because both actions appear at the same time,
% from $\Bar{\mathcal{D}}^p$, 
we can define the policy generating $\mathcal{D}^p$ as:
\begin{equation}\label{pref policy}
	\Bar{\pi}^p(\mathfrak{a}|x,y_w,y_l)=1/2,\forall \mathfrak{a}\in\mathcal{A}.
\end{equation}
% Given the state $\langle x, y_w, y_l \rangle$, we simplify $\Bar{\pi}^p(\mathfrak{a}|x,y_w,y_l)$ as $\Bar{\pi}^p(\mathfrak{a}|x)$.
% Now we can formulate the distribution to generate $\Bar{\mathcal{D}}^p$ as:
% \begin{equation}
% 	\begin{gathered}
% 	x\sim\rho\\
% 	\langle y_w,y_l \rangle \sim \mu(\cdot | x)\\
% 	\mathfrak{a}\sim \Bar{\pi}^p(\cdot|x) \\
% 	I\sim r^p(\mathfrak{a}|x)
% 	\end{gathered}.
% \end{equation}
Define the preference-generating policy to be optimized as $\pi^p_\theta$, so the expected reward of $\pi^p_\theta$ is:
% over $\mathcal{D}^p$ is:
\begin{equation}\label{rl p}
	R(\pi^p_\theta)=\underset{\substack{x\sim \rho \\ \langle y_w, y_l \rangle \sim \mu(\cdot|x)
	\\ \mathfrak{a}\sim \pi^p_\theta(\cdot|x)}}{\mathbb{E}}[r^p(\mathfrak{a}|x)],
\end{equation}
and it's easy to see that $R(\Bar{\pi}^p)=1/2$.

We express the preference maximization objective in the reinforcement learning setting:
\begin{equation}\label{max obj}
	\max_{\pi^p_\theta}  R(\pi^p_\theta).
\end{equation}
Typically, the gradient of the objective \ref{max obj} needs to be estimated from samples continually collected by $\pi^p_\theta$,
which is data-inefficient. However, for preference maximization, we can directly estimate gradient from dataset $\mathcal{D}^p$,
which is off-policy.

\begin{theorem}\label{theorem pm}
Gradient of preference(reward) maximization objective \ref{max obj}
% (which is identical to preferences maximization term of IPO)
% can be optimized using off-policy gradient ascent algorithm,
can be estimated from $\mathcal{D}^p$
\begin{align*}
\nabla_\theta R(\pi^p_\theta)&=\underset{\langle x,y_w,y_l,I\sim\mathcal{D} \rangle}{\mathbb{E}}\Bigl[\\
    I&\nabla_\theta\pi^p_\theta(y_w|x)\\
    +(1-I)&\nabla_\theta\pi^p_\theta(y_l|x)
\Bigr].
\end{align*}
\end{theorem}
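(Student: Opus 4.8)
The plan is to evaluate $R(\pi^p_\theta)$ in closed form over the two-element action set $\mathcal{A}=\{y_w,y_l\}$, rewrite the pair-wise reward $r^p$ as a conditional expectation of the Bernoulli preference label $I$, and then differentiate under the expectation sign. The importance-sampling view enters only mildly: the uniform behaviour policy $\bar\pi^p\equiv 1/2$ of \eqref{pref policy} is what lets us replace the on-policy expectation over $\mathfrak{a}\sim\pi^p_\theta$ by a reweighted expectation over the fixed dataset.

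\textbf{Step 1: expand the action expectation.} Since $\mathcal{A}$ has exactly two elements and $\pi^p_\theta(y_w|x)+\pi^p_\theta(y_l|x)=1$ by \eqref{pair:policy}, the inner expectation in \eqref{rl p} is $\mathbb{E}_{\mathfrak{a}\sim\pi^p_\theta(\cdot|x)}[r^p(\mathfrak{a}|x)] = \pi^p_\theta(y_w|x)\,r^p(y_w|x) + \pi^p_\theta(y_l|x)\,r^p(y_l|x)$, which is also the importance-sampling reweighting of $\bar\pi^p$ by the ratios $\pi^p_\theta/\bar\pi^p$. By \eqref{preference:reward}, $r^p(y_w|x)=p^*(y_w\succ y_l|x)$ and $r^p(y_l|x)=1-p^*(y_w\succ y_l|x)$.

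\textbf{Step 2: replace $p^*$ by the label $I$.} Because $I\mid x,y_w,y_l\sim\mathrm{Bernoulli}(p^*(y_w\succ y_l|x))$ by \eqref{pstar}, we have $p^*(y_w\succ y_l|x)=\mathbb{E}[I\mid x,y_w,y_l]$ and $1-p^*(y_w\succ y_l|x)=\mathbb{E}[1-I\mid x,y_w,y_l]$. Since $\pi^p_\theta(y_w|x)$ and $\pi^p_\theta(y_l|x)$ are functions of $(x,y_w,y_l)$ only and do not depend on $I$, I can fold them inside the conditional expectation and then take the outer expectation over $x\sim\rho$, $\langle y_w,y_l\rangle\sim\mu(\cdot|x)$, obtaining $R(\pi^p_\theta) = \mathbb{E}_{\langle x,y_w,y_l,I\rangle\sim\mathcal{D}^p}[\,I\,\pi^p_\theta(y_w|x) + (1-I)\,\pi^p_\theta(y_l|x)\,]$.

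\textbf{Step 3: differentiate.} Since the law of $\mathcal{D}^p$ does not depend on $\theta$, I would exchange $\nabla_\theta$ with the expectation — justified by the standard dominated-convergence regularity, as $\pi^p_\theta$ is assumed a smooth, bounded function of $\theta$ — which yields exactly the claimed identity. The only genuinely delicate point is this interchange of differentiation and expectation; everything else is the two-action bookkeeping plus the observation (already stressed in the text) that one preference sample simultaneously supplies unbiased reward signals for \emph{both} actions, which is precisely what makes the resulting gradient estimator off-policy.
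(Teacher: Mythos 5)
Your proof is correct, and it reaches the paper's identity by a slightly different and arguably cleaner route. The paper starts from the REINFORCE policy-gradient formula $\nabla_\theta R(\pi^p_\theta)=\mathbb{E}_{\mathfrak{a}\sim\pi^p_\theta}[r^p(\mathfrak{a}|x)\nabla_\theta\log\pi^p_\theta(\mathfrak{a}|x)]$, applies importance sampling to move the action distribution to the uniform behaviour policy $\bar\pi^p$, and then collapses the weights via $\pi^p_\theta\nabla_\theta\log\pi^p_\theta=\nabla_\theta\pi^p_\theta$ before substituting $r^p(y_w|x)=\mathbb{E}[I]$ and $r^p(y_l|x)=\mathbb{E}[1-I]$. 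You instead expand the two-element action expectation in closed form, use the tower property to replace $p^*$ by the label $I$, and only then differentiate, with the interchange of $\nabla_\theta$ and $\mathbb{E}$ justified because the data law is $\theta$-independent. The two arguments are mathematically equivalent (both hinge on the two-action structure and the Bernoulli identity), but yours proves the stronger statement $R(\pi^p_\theta)=\mathbb{E}_{\langle x,y_w,y_l,I\rangle\sim\mathcal{D}^p}[I\pi^p_\theta(y_w|x)+(1-I)\pi^p_\theta(y_l|x)]$ at the level of the objective itself, which immediately identifies $-\mathcal{L}_{\text{MPO\_RM}}$ with $R(\pi^p_\theta)$ rather than merely matching gradients; the paper's version, by contrast, makes the off-policy/importance-sampling interpretation explicit, which is the conceptual framing the authors want to emphasize. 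Your parenthetical remark that the direct expansion \emph{is} the importance-sampling reweighting (since $\bar\pi^p\equiv 1/2$ makes the $1/2$ from the behaviour-policy expectation cancel the $2$ in the ratio $\pi^p_\theta/\bar\pi^p$) correctly reconciles the two views.
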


We give proof for \ref{theorem pm} in Appendix. \ref{a:proof_pm}.

\subsection{Off-policy Preference Learning under KL-regularation}
% To solve the preferences maximization with KL-regularation objective, we
% \begin{equation}
% 	\begin{gathered}
% 	\nabla_\theta\sum \pi_\theta(\log \pi_\theta - \log \pi_{\text{ref}})\\
% 	=\sum \nabla_\theta\pi_\theta (\log\pi_\theta - \log \pi_{\text{ref}}) + 
% 	\sum \pi_\theta (\nabla_\theta\log\pi_\theta)\\
% 	=\sum \nabla_\theta\pi_\theta (\log\pi_\theta - \log \pi_{\text{ref}}) + 
% 	\sum \pi_\theta (\nabla_\theta\log\pi_\theta)\\
% 	=\sum \nabla_\theta\pi_\theta (\log\pi_\theta - \log \pi_{\text{ref}}) + 
% 	\sum \nabla_\theta \pi_\theta\\
% 	=\sum \nabla_\theta\pi_\theta (\log\pi_\theta - \log \pi_{\text{ref}})
% 	\end{gathered}
% \end{equation}
% We use Bradley-Terry model to represent preference policy $\pi_\theta(y_w\succ y_l|x)$:
% \begin{equation}
	% \pi_\theta(y_w\succ y_l|x)=\sigma()
% \end{equation}
% Define the corresponding point-wise policy for $\pi^p_\theta$ as $\pi^s_\theta$,
% we use Bradley-Terry model to approximate $\pi^p_\theta$ as:
% \begin{equation}
% \pi^p_\theta(y_w\succ y_l|x)=\sigma(\log \pi^s_\theta(y_w|x) - \log\pi^s_\theta(y_l|x)).
% \end{equation}
Using the definition \ref{pair:policy},
we can reformulate reward maximization objective \ref{max obj} as:
\begin{equation}
	\max_{\pi_\theta} R(\pi^p_\theta),
\end{equation}
which means we optimize $\pi_\theta$ to maximize the preference reward for corresponding policy $\pi^p_\theta$.

Like RLHF's 'PPO-ptx' objective \ref{ppo-ptx},
we also add 'ptx' to KL-regularized preference maximization objective \ref{max obj} and get
\begin{equation}
	\begin{gathered}
	\max_{\pi_\theta} R(\pi^p_\theta) - \beta \mathbb{D}_{\text{KL}}[\pi_\theta||\pi_{\text{ref}}]
	+\gamma\underset{x\sim \mathcal{D}_\text{pretrain}}{\mathbb{E}}[\log\pi_\theta(x)]
	\end{gathered}.
\end{equation}
% which includes two expectations over different distributions:
% \begin{equation}
	% \max_{\pi^p_\theta,\pi^s_\theta}\underset{\substack{x\sim\rho}}
	% {\mathbb{E}}\Big[\big[
		% \underbrace{\underset{\substack{\langle y_w,y_l \rangle \sim \mu(\cdot|x) \\ \mathfrak{a}\sim \pi^p_\theta(\cdot|x)}}{\mathbb{E}}r^p(\mathfrak{a}|x)}
		% _{\textnormal{preferences maximization on }\pi^p_\theta}\big]
	% - \tau\underbrace{
		% \underset{\substack{y\sim \pi^s_\theta(\cdot|x)\\ \vphantom{b \succ c^p}}}{\mathbb{E}}[\log\frac{\pi^s_\theta(y|x)}{\pi^s_{\text{ref}}(y|x)}]
	% }_{\textnormal{KL-regularation on }\pi^s_\theta}\Big].
% \end{equation}
From theorem \ref{theorem pm}, preference maximization term $R(\pi^p_\theta)$ can be directly solved with 
off-policy policy gradient method. 
Pretraining data regularization term $\mathbb{E}_{x\sim\mathcal{D}_\text{pretrain}}\log \pi_\theta(x)$ can also be solved with offline data.
But the KL-regularization term $\mathbb{D}_\text{KL}[\pi_\theta||\pi_\text{ref}]$ needs to collect samples 
from $\pi_\theta(\cdot|x)$, which is on-policy.
%Additionally,
%to optimize KL-regularization with stochastic samples, it may ignore the regularization on 
%$\pi_{\text{ref}}(y_w|x), \pi_{\text{ref}}(y_l|x)$.

\paragraph{Off-policy KL-regularation on reference policy $\pi_\text{ref}$}
Minimize $\mathbb{D}_{\text{KL}}[\pi_\theta||\pi_{\text{ref}}]$ needs on-policy
samples collection, which is data-inefficient. To solve the problem, we replace
$\mathbb{D}_\text{KL}[\pi_\theta||\pi_\text{ref}]$ with $-\mathbb{E}_{\langle x,y \rangle \sim \mathcal{D}_\text{ref}}[\log \pi_\theta(y|x)]$.
Like pretraining data regularization, $\mathbb{E}_{\langle x,y \rangle \sim \mathcal{D}_\text{ref}}[\log \pi_\theta(y|x)]$ can be computed with offline data.

% \paragraph{Explicit regulariztion on $\pi_{\text{ref}}(y_w|x), \pi_{\text{ref}}(y_l|x)$}
% Optimize KL-regularization may using stochastic samples may ignore regularization on $\pi_\theta(y_w|x),\pi_\theta(y_l|x)$.
% To solve this problem, we add a KL-regularation term for preference probability.

% Define $\pi^p_{\text{ref}}(y_w\succ y_l|x)=\sigma(\log \pi_{\text{ref}}(y_w|x)-\log \pi_{\text{ref}}(y_l|x))$
% , then
% \begin{equation}
	% \begin{split}
	% \mathbb{D}_{\text{KL}}[\pi^p_\theta||\pi^p_{\text{ref}}]=&
	% \underset{\substack{x\sim\rho\\ y_w,y_l \sim \mu(\cdot|x) \\ \mathfrak{a} \sim \pi^p_{\theta}(\cdot |x)}}{\mathbb{E}}
	% [\log\frac{\pi^p_\theta(\mathfrak{a}|x)}{\pi^p_{\text{ref}}(\mathfrak{a}|x)}
	% ]\\
	% =&\underset{\substack{x\sim\rho\\ y_w,y_l \sim \mu(\cdot|x)}}{\mathbb{E}}
	% [\sum_{\mathfrak{a}\in\mathcal{A}}
	% \pi^p_\theta(\mathfrak{a}|x)
	% \log\frac{\pi^p_\theta(\mathfrak{a}|x)}{\pi^p_{\text{ref}}(\mathfrak{a}|x)}
	% ].
	% \end{split}
% \end{equation}

\paragraph{Maximum Preference Optimization (MPO) loss} Using the modified regularization on $\pi_\text{ref}$,
we get the final objective of MPO:
\begin{equation}\label{MPO obj}
	\max_{\pi_\theta} R(\pi^p_\theta) + \beta\underset{\langle x, y \rangle\sim \mathcal{D}_\text{ref}}{\mathbb{E}}[\log \pi_\theta(y|x)]
	+\gamma\underset{x\sim\mathcal{D}_\text{pretrain}}{\mathbb{E}}[\log\pi_\theta(x)]
\end{equation}
Using objective \ref{MPO obj}, we define empirical MPO loss on dataset $\mathcal{D}^p$,$\mathcal{D}_{\text{ref}}$
and $\mathcal{D}_{\text{pretrain}}$:
\begin{equation}\label{MPO loss}
    \begin{aligned}
	\mathcal{L}_{\text{MPO}}&=\underbrace{\underset{
		\substack{
		\langle x,y_w,y_l, I\rangle \sim \mathcal{D}^p \\
		}
	}{-\mathbb{E}}[I\pi^p_\theta(y_w|x) + (1-I)\pi^p_\theta(y_l|x)]}_{
        \mathcal{L}_{\text{MPO\_RM}}
    }\\
	&\underbrace{-\beta\underset{\langle x,y\rangle\sim \mathcal{D}_\text{ref}}{\mathbb{E}}[\log\pi_\theta(y|x)]}_{ \mathcal{L}_\text{MPO\_REF} }\\
	&\underbrace{-\gamma
		\underset{x\sim \mathcal{D}_{\text{pretrain}}}
		{\mathbb{E}}[\log{\pi_\theta(x)}]}_{ \mathcal{L}_\text{MPO\_PRETRAIN} }.
    \end{aligned}
\end{equation}
Loss \ref{MPO loss} is very simple and intuitive, where $I\pi^p_\theta(y_w|x)+(1-I)\pi^p_\theta(y_l|x)$
tries to maximize preferences,
$\beta$ controls the strength of the regularization of SFT dataset,
and $\gamma$ controls the strength of the regularization of pretraining dataset.

\paragraph{Eliminate both the need for reward model and reference policy}
By using preference as reward, we eliminate the need for a reward model to approximate preference probability.
By replacing KL-regularization $\mathbb{D}_\text{KL}[\pi_\theta||\pi_\text{ref}]$ with offline dataset regularization $-\mathbb{E}_{\langle x,y \rangle \sim \mathcal{D}_\text{ref}}[\log\pi_\theta(y|x)]$,
we remove the need for the reference policy $\pi_\text{ref}$.
As a result, the MPO algorithm simplifies the learning process and reduces memory usage.

\subsection{Accelerated Training of MPO}
In practice, most preferences are deterministic, and we found DPO has a faster convergence rate than MPO.
This is because DPO weights the gradient by how incorrectly the implicit reward model orders the completions\cite{rafailov2023direct}.

Let's compare the gradient of DPO and MPO's losses.
% \begin{equation}
    \begin{align*}
        \nabla_\theta\mathcal{L}_\text{DPO}(\pi_\theta;\pi_\text{ref})&=
        -\beta\underset{\langle x,y_w,y_l,I \rangle \sim \mathcal{D}^p}{\mathbb{E}}\Bigl[\\
        &I{\color{red}\sigma(r_\theta(x,y_l) - r_\theta(x,y_w))}\cdot\\
        &\nabla_\theta(\log\pi_\theta(y_w|x)-\log \pi_\theta(y_l|x))\\
        +&(1-I){\color{red}\sigma(r_\theta(x,y_w) - r_\theta(x,y_l))}\cdot\\
        &\nabla_\theta(\log\pi_\theta(y_l|x)-\log \pi_\theta(y_w|x))
        \Bigr]
    \end{align*}
% \end{equation}

% % \begin{equation}
%     \begin{align*}
%         \nabla_\theta\mathcal{L}_\text{IPO}(\pi_\theta;\pi_\text{ref})&=
%         -2\underset{x,y_w,y_l,I\sim \mathcal{D}^p}{\mathbb{E}}\Bigl[\\
%         &I(h_{\pi_\theta}(x,y_w,y_l)-\frac{\tau^{-1}}{2})\cdot\\
%         &\nabla_\theta(\log\pi^s(y_w|x)-\log \pi^s_\theta(y_l|x))\\
%         +&(1-I)(h_{\pi_\theta}(x,y_l,y_w)-\frac{\tau^{-1}}{2})\cdot\\
%         &\nabla_\theta(\log\pi^s(y_l|x)-\log \pi^s_\theta(y_w|x))
%         \Bigr]
%     \end{align*}
% % \end{equation}

Define the reward maximization part of MPO loss as 
\begin{align*}
\mathcal{L}_\text{MPO\_RM}(\pi_\theta)=&-\underset{\langle x,y_w,y_l,I\rangle\sim\mathcal{D}^p}{\mathbb{E}}
\Bigl[\\
&I\pi^p_\theta(y_w|x) + (1-I)\pi^p_\theta(y_l|x)
\Bigr]
\end{align*}
Then the gradient of the $\mathcal{L}_\text{MPO\_RM}$ is

% \begin{equation}
    \begin{align*}
        \nabla_\theta\mathcal{L}_\text{MPO\_RM}&=-\underset{\langle x,y_w,y_l,I\rangle \sim\mathcal{D}}{\mathbb{E}}\Bigl[\\
    I&{\color{red}\pi^p_\theta(y_w|x)\pi^p_\theta(y_l|x)}\cdot\\
    &\nabla_\theta(\log \pi_\theta(y_w|x)-\log \pi_\theta(y_l|x))\\
    +(1-I)&{\color{red}\pi^p_\theta(y_l|x)\pi^p_\theta(y_w|x)}\cdot\\
    &\nabla_\theta(\log \pi_\theta(y_l|x)-\log \pi_\theta(y_w|x))
    \Bigr],
    \end{align*}
% \end{equation}
which is near zero when $\pi^p_\theta(y_w\succ y_l|x)$ or $\pi^p_\theta(y_l\succ y_w|x)$ nears zero, and this will slow down the 
learning process.
\paragraph{Preference Matching}
Like the gradient of DPO, we can weight the gradient by how incorrectly the model orders the completions.
Define the weighted  gradient as $\nabla_\theta\mathcal{L}^w_\text{MPO\_RM}$:
% \begin{equation}
    \begin{align*}
        \nabla_\theta\mathcal{L}^w_\text{MPO\_RM}&=-\underset{\langle x,y_w,y_l,I\rangle \sim\mathcal{D}^p}{\mathbb{E}}\Bigl[\\
    I&{\color{red}\pi^p_\theta(y_l|x)}\cdot\\
    &\nabla_\theta(\log \pi_\theta(y_w|x)-\log \pi_\theta(y_l|x))\\
    +(1-I)&{\color{red}\pi^p_\theta(y_w|x)}\cdot\\
    &\nabla_\theta(\log \pi_\theta(y_l|x)-\log \pi_\theta(y_w|x))
    \Bigr].
    \end{align*}
% \end{equation}
The loss $\mathcal{L}^w_{\text{MPO\_RM}}$ corresponding to $\nabla_\theta\mathcal{L}^w_{\text{MPO\_RM}}$ is
\begin{align*}
        \mathcal{L}^w_\text{MPO\_RM}=&-\underset{\langle x,y_w,y_l,I\rangle \sim\mathcal{D}^p}{\mathbb{E}}\Bigl[\\
    &I\log \pi^p_\theta(y_w|x)
    +(1-I)\log \pi^p_\theta(y_l|x) 
    \Bigr],
\end{align*}
which is the cross entropy loss between $\pi^{*^p}(\cdot|x,y_w,y_l)$ and $\pi^p_\theta(\cdot|x,y_w,y_l)$,
where $\pi^{*^p}(\cdot|x,y_w,y_l)$ is the local distribution introduced by $p^*(y_w\succ y_l|x)$ and $p^*(y_l\succ y_w| x)$.
So the weighted loss learns to match the preference probability.

\paragraph{Preference Maximization}
When preference is not deterministic, the weighted loss $\mathcal{L}^w_\text{MPO\_RM}$ will be suboptimal.
We can switch to loss $\mathcal{L}_\text{MPO\_RM}$ after training enough iterations with $\mathcal{L}^w_\text{MPO\_RM}$.

% \subsection{Prevent Reference Overfitting}

Combine the weighted gradient $\nabla_\theta\mathcal{L}^w_\text{MPO\_RM}$, we summarize the MPO in Algorithm\ref{alg:mpo}.
%   \Statex \hspace*{\algorithmicindent}\parbox[t]{.8\linewidth}{\raggedright #1}
    % \addtolength\linewidth{-4ex}

\begin{algorithm}[tb]
   \caption{Maximum Preference Optimization (MPO)}
   \label{alg:mpo}
\begin{algorithmic}
   \STATE {\bfseries Input:} {\raggedright Preference dataset $\mathcal{D}^p$, reference dataset $\mathcal{D}_\text{ref}$, \\\quad pretrain dataset $\mathcal{D}_\text{pretrain}$}, config $weighted\_iter\_num$
   \STATE $iter \leftarrow 0$
   \REPEAT
    \IF{$iter$ $>$ $weighted\_iter\_num$}
    \STATE Sample batch from $\mathcal{D}^p$, compute gradient $g_\text{rm}$ \\ \quad using $\nabla_\theta\mathcal{L}^w_\text{MPO\_RM}$
    \ELSE
    \STATE Sample batch from $\mathcal{D}^p$, compute gradient $g_\text{rm}$ \\ \quad using $\nabla_\theta\mathcal{L}_\text{MPO\_RM}$
    \ENDIF
   \STATE Sample batch from $\mathcal{D_\text{ref}}$, compute gradient $g_\text{ref}$ \\ \quad using $\nabla_\theta\mathcal{L}_\text{MPO\_REF}$
   \STATE Sample batch from $\mathcal{D_\text{ref}}$, compute gradient $g_\text{pretrain}$ \\ \quad using $\nabla_\theta\mathcal{L}_\text{MPO\_PRETRAIN}$
   \STATE let $g_\theta=g_\text{rm}+g_\text{ref}+g_\text{pretrain}$ 
   \STATE update $\pi^s_\theta$ with $g_\theta$
   \STATE $iter \leftarrow iter + 1$
   \UNTIL{converged.}
\end{algorithmic}
\end{algorithm}

% \subsection{Maximum Preference Optimization}

% \section{Experiments}

% \section{Conclusion and Future Work}

% \subsection{Citations}
% Citations use \verb+natbib+. The documentation may be found at
% \begin{center}
	% \url{http://mirrors.ctan.org/macros/latex/contrib/natbib/natnotes.pdf}
% \end{center}

% Here is an example usage of the two main commands (\verb+citet+ and \verb+citep+): Some people thought a thing \citep{kour2014real, hadash2018estimate} but other people thought something else \citep{kour2014fast}. Many people have speculated that if we knew exactly why \citet{kour2014fast} thought this\dots

\section{Experiments}

% This section delineates a series of empirical investigations into the efficacy of the MPO algorithm for preference learning in the absence of a reference policy and its ability to mitigate over-fitting through off-policy KL regularization. The experimental framework parallels that of the DPO and encompasses two main stages: SFT and preference alignment. During SFT, the foundational model, Mistral-7B-v0.1\footnote{\url{https://huggingface.co/mistralai/Mistral-7B-v0.1}}, undergoes fine-tuning with point-wise prompt-response datasets. In the alignment phase, the model's policy for text generation is further optimized using datasets curated with prompts and corresponding human preferences. Throughout the preference learning experiments, hyperparameters were kept consistent, with a learning rate of 5e-7, a batch size of 32, gradient accumulation steps of 4, and a single epoch. 
%Detailed experimental procedures are documented in Appendix \ref{exp:1}.

This section presents a sequence of empirical studies assessing the performance of the MPO algorithm in learning preferences without a reference policy and its capability to prevent overfitting via off-policy KL regularization. The experimental design is aligned with DPO and involves two principal phases: SFT and preference alignment. During SFT, the base model, Mistral-7B-v0.1\footnote{\url{https://huggingface.co/mistralai/Mistral-7B-v0.1}}, is fine-tuned using datasets consisting of point-wise prompt-response pairs. In the subsequent preference alignment phase, we refine the model's text generation policy by employing datasets that pair prompts with human preference judgments. The hyperparameters remained constant across all preference learning experiments, including a learning rate of 5e-7, a batch size of 32, and a training duration of one epoch.

\subsection{Preference Learning without Reference Policy}
% Typically, DPO and IPO algorithms rely on a reference policy to guide regularized preference learning. In contrast, the MPO algorithm directly engages in off-policy preference maximization without the need for such a reference model. We probe the preference learning prowess of MPO using hh-rlhf\footnote{\url{https://huggingface.co/datasets/Anthropic/hh-rlhf}} dataset which comprises paired responses for each prompt—labeled as 'chosen' or 'rejected'. During the SFT phase, the 'chosen' responses were treated as the target for the given prompts. In the preference alignment phase, we refined the policy model with paired responses, without additional reference model and Off-policy KL-regularization. The test results, as illustrated in Table 1, feature the accuracy rates for 14 benchmarks. Using the SFT model's performance as the baseline, the MPO algorithm achieved an improvement of 3.7\% in average accuracy, mirroring the performance of DPO closely. These results indicate that, even without reference policy, MPO is capable of learning human preferences as effectively as DPO and IPO.

Typically, DPO and IPO algorithms rely on a reference policy to guide regularized preference learning. In contrast, the MPO algorithm directly engages in off-policy preference maximization without the need for such a reference model. We probe the preference learning prowess of MPO using HH-RLHF\footnote{\url{https://huggingface.co/datasets/Anthropic/hh-rlhf}} dataset which comprises paired responses for each prompt—labeled as ’chosen’ or ’rejected’. During the SFT phase, the ’chosen’ responses were treated as the target for the given prompts. In the preference alignment phase, we refined the policy model with paired responses. As delineated in Table 1, the empirical results demonstrate accuracy rates across 14 benchmarks. When benchmarked against the performance of the SFT model, MPO exhibited a mean accuracy enhancement of 3.7, closely matching the performance of DPO. These outcomes indicate that MPO is capable of learning human preferences as effectively as DPO, even in the absence of a reference policy.

\begin{table}[t]
    \caption{Comparison of preference learning ability of DPO, IPO and MPO (without reference model) on 14 benchmarks.}
    \label{preference-table}
    \vskip 0.15in
    \begin{center}
    \begin{small}
    \begin{sc}
    \begin{tabular}{lcccr}
    \toprule
    Task & SFT & DPO & IPO & MPO \\
    \midrule
    ANLI-R1                 & 0.424  & 0.476&0.443&0.464 \\
    ANLI-R2                 & 0.391  & 0.445&0.417&0.443 \\
    ANLI-R3                 & 0.416  & 0.444&0.431&0.443 \\
    %arc_challenge           & 0.525  & 0.579&0.503&0.579 \\
    ARC-C                   & 0.564  & 0.605&0.565&0.620 \\
    %arc\_easy                & 0.821  & 0.839&0.811&0.847 \\
    ARC-E                & 0.838  & 0.846&0.838&0.849 \\
    BoolQ                    & 0.830  & 0.844&0.834&0.853 \\
    %hellaswag                & 0.587  & 0.656&0.523&0.635 \\
    HellaSwag                & 0.773  & 0.820&0.773&0.810 \\
    %openbookqa               & 0.362  & 0.398&0.320&0.382 \\
    OpenBookQA               & 0.472  & 0.502&0.472&0.502 \\
    %piqa                     & 0.806  & 0.813&0.784&0.810 \\
    PiQA                     & 0.818  & 0.825&0.816&0.822 \\
    RTE                      & 0.661  & 0.718&0.693&0.737 \\
    %toxigen                 & 0.579  & 0.673&0.547&0.672 \\
    Toxigen                  & 0.573  & 0.660&0.572&0.656 \\
    TruthfulQA1              & 0.272  & 0.340&0.285&0.350 \\
    TruthfulQA2              & 0.399  & 0.488&0.422&0.490 \\
    WiC                      & 0.577  & 0.571&0.558&0.528 \\
    Winogrande               & 0.739  & 0.721&0.740&0.733 \\
    \hline
    Average                  & 0.583  & \textbf{0.620}&0.591 &\textbf{0.620} \\
    
    % Breast    & 95.9$\pm$ 0.2& 96.7$\pm$ 0.2& $\surd$ \\
    % Cleveland & 83.3$\pm$ 0.6& 80.0$\pm$ 0.6& $\times$\\
    % Glass2    & 61.9$\pm$ 1.4& 83.8$\pm$ 0.7& $\surd$ \\
    % Credit    & 74.8$\pm$ 0.5& 78.3$\pm$ 0.6&         \\
    % Horse     & 73.3$\pm$ 0.9& 69.7$\pm$ 1.0& $\times$\\
    % Meta      & 67.1$\pm$ 0.6& 76.5$\pm$ 0.5& $\surd$ \\
    % Pima      & 75.1$\pm$ 0.6& 73.9$\pm$ 0.5&         \\
    % Vehicle   & 44.9$\pm$ 0.6& 61.5$\pm$ 0.4& $\surd$ \\
    \bottomrule
    \end{tabular}
    \end{sc}
    \end{small}
    \end{center}
    \vskip -0.1in
    \end{table}

\subsection{Off-policy KL-regularization}
% The primary objective of the following HellaSwag and mathematical experiments is to investigate the efficacy of off-policy KL-regularization in mitigating overfitting.
% We utilize the HellaSwag\footnote{\url{https://rowanzellers.com/hellaswag/}} training set and MetaMathQA\footnote{
% \url{https://huggingface.co/datasets/meta-math/MetaMathQA}
% } datasets as corpora for SFT, and employ the HH-RLHF dataset for preference learning. The off-policy KL-regularization term within MPO is the supervised fine-tuning loss on the SFT datasets. Our findings, presented in \cref{regularization-hellaswag-table,regularization-metamath-table}, indicate that DPO and IPO experience a notable drop in performance on the HellaSwag, GSM8K\footnote{\url{https://huggingface.co/datasets/gsm8k}}, and MATH\footnote{\url{https://huggingface.co/datasets/hendrycks/competition_math}} benchmarks after preference learning, while MPO maintains its efficacy. The incorporation of off-policy constraints enables MPO to effectively counteract the overfitting tendencies seen in DPO and IPO models. Furthermore, the TruthfulQA\footnote{\url{https://huggingface.co/datasets/truthful_qa}} benchmark accuracy metrics reveal that preference learning performance is not hindered by the off-policy constraints. These results underscore the robustness of MPO in optimizing for human preferences and its superiority in mitigating overfitting relative to DPO and IPO, and open avenues for developing more resilient strategies in preference learning without dependency on reference policies.

Due to the failure of KL regularization, both DPO and IPO algorithms can enhance the performance on downstream related tasks based on preference data, but they may decrease the performance of tasks in the SFT or pretrain stage that have a lower correlation with preference data.

To explore the effectiveness of off-policy KL-regularization in mitigating overfitting, we designed a series of experiments targeting the HellaSwag\footnote{\url{https://rowanzellers.com/hellaswag/}}, GSM8K\footnote{\url{https://huggingface.co/datasets/gsm8k}}, and MATH\footnote{\url{https://huggingface.co/datasets/hendrycks/competition_math}} benchmarks. We utilize the HellaSwag and MetaMathQA\footnote{\url{https://huggingface.co/datasets/meta-math/MetaMathQA}} datasets for SFT and employ the HH-RLHF dataset for preference learning. Our results, detailed in Tables 2 and 3, show that DPO and IPO exhibit a significant decline in performance on the HellaSwag, GSM8K, and MATH benchmarks after preference learning since these three tasks are irrelevant to HH-RLHF. In contrast, MPO maintains its performance. The incorporation of reference regularization into preference loss successfully mitigates the overfitting tendencies evident in DPO and IPO. Furthermore, our scores on the TruthfulQA\footnote{\url{https://huggingface.co/datasets/truthful_qa}} benchmark, which is relevant to the HH-RLHF task, confirm that off-policy constraints do not adversely affect preference learning. These results underscore the resilience of MPO in aligning with human preferences and its superior capability in curbing overfitting compared to DPO and IPO. The research opens avenues for the development of more durable preference learning strategies independent of reference policies.

\begin{table}[t]
    \caption{Comparison of Regularization ability of DPO, IPO and MPO on HellaSwag}.
    \label{regularization-hellaswag-table}
    \vskip 0.15in
    \begin{center}
    \begin{small}
    \begin{sc}
    \begin{tabular}{lcccr}
    \toprule
    Task & SFT & DPO & IPO & MPO \\
    \midrule
    %hellaswag                & 0.650 &0.612&0.615&0.682\\
    HellaSwag                & 0.841 &0.801&0.817&\textbf{0.861}\\
    TruthfulQA1              & 0.280 &\textbf{0.353}&0.313&0.340\\
    TruthfulQA2              & 0.435 &\textbf{0.498}&0.454&0.484\\
    \hline
    Average                  & 0.519 &0.551&0.528&\textbf{0.562}\\
    
    % Breast    & 95.9$\pm$ 0.2& 96.7$\pm$ 0.2& $\surd$ \\
    % Cleveland & 83.3$\pm$ 0.6& 80.0$\pm$ 0.6& $\times$\\
    % Glass2    & 61.9$\pm$ 1.4& 83.8$\pm$ 0.7& $\surd$ \\
    % Credit    & 74.8$\pm$ 0.5& 78.3$\pm$ 0.6&         \\
    % Horse     & 73.3$\pm$ 0.9& 69.7$\pm$ 1.0& $\times$\\
    % Meta      & 67.1$\pm$ 0.6& 76.5$\pm$ 0.5& $\surd$ \\
    % Pima      & 75.1$\pm$ 0.6& 73.9$\pm$ 0.5&         \\
    % Vehicle   & 44.9$\pm$ 0.6& 61.5$\pm$ 0.4& $\surd$ \\
    \bottomrule
    \end{tabular}
    \end{sc}
    \end{small}
    \end{center}
    \vskip -0.1in
    \end{table}
    
    \begin{table}[t]
    \caption{Comparison of Regularization ability of DPO, IPO and MPO on GSM8K and MATH}.
    \label{regularization-metamath-table}
    \vskip 0.15in
    \begin{center}
    \begin{small}
    \begin{sc}
    \begin{tabular}{lcccr}
    \toprule
    Task & SFT & DPO & IPO & MPO \\
    \midrule
    GSM8K             & 0.648&0.589&0.585&\textbf{0.671}\\
    MATH              & \textbf{0.170}&0.129&0.122&0.160\\
    TruthfulQA1       & 0.312&\textbf{0.351}&0.327&0.342\\
    TruthfulQA2       & 0.452&\textbf{0.491}&0.462&0.485\\
    \hline
    Average           & 0.396&0.390&0.374&\textbf{0.415}\\
    
    % Breast    & 95.9$\pm$ 0.2& 96.7$\pm$ 0.2& $\surd$ \\
    % Cleveland & 83.3$\pm$ 0.6& 80.0$\pm$ 0.6& $\times$\\
    % Glass2    & 61.9$\pm$ 1.4& 83.8$\pm$ 0.7& $\surd$ \\
    % Credit    & 74.8$\pm$ 0.5& 78.3$\pm$ 0.6&         \\
    % Horse     & 73.3$\pm$ 0.9& 69.7$\pm$ 1.0& $\times$\\
    % Meta      & 67.1$\pm$ 0.6& 76.5$\pm$ 0.5& $\surd$ \\
    % Pima      & 75.1$\pm$ 0.6& 73.9$\pm$ 0.5&         \\
    % Vehicle   & 44.9$\pm$ 0.6& 61.5$\pm$ 0.4& $\surd$ \\
    \bottomrule
    \end{tabular}
    \end{sc}
    \end{small}
    \end{center}
    \vskip -0.1in
    \end{table}

\section{Conclusion and Future Works}
We have introduced MPO, an off-policy policy based algorithm for preference learning.
Unlike most off-policy algorithm which are derived using KL-regularized reward maximization,
MPO is directly derived from an importance sampling view. Because MPO maximizes reward without 
the need of reference policy, MPO simplifies the learning process and saves memory usage.
MPO uses a sample-based forward KL regularization term to prevent overfitting preference data,
which makes KL-regularization truly effective and data efficient.

We haven't tested the ratio of reference or pretrain data and different regularization weights 
to balance preference learning and reference regularization. When use reference data as regularization,
this will retrain the reference data, which will lead to overfitting of the reference data.
How to balance preference and regularization weight to get the best performance, and how to avoid reference
data overfitting are left for future work.

\bibliographystyle{unsrtnat}
\bibliography{main}  %%% Uncomment this line and comment out the ``thebibliography'' section below to use the external .bib file (using bibtex) .

\begin{thebibliography}{27}
\providecommand{\natexlab}[1]{#1}
\providecommand{\url}[1]{\texttt{#1}}
\expandafter\ifx\csname urlstyle\endcsname\relax
  \providecommand{\doi}[1]{doi: #1}\else
  \providecommand{\doi}{doi: \begingroup \urlstyle{rm}\Url}\fi

\bibitem[Brown et~al.(2020)Brown, Mann, Ryder, Subbiah, Kaplan, Dhariwal, Neelakantan, Shyam, Sastry, Askell, et~al.]{brown2020language}
Tom Brown, Benjamin Mann, Nick Ryder, Melanie Subbiah, Jared~D Kaplan, Prafulla Dhariwal, Arvind Neelakantan, Pranav Shyam, Girish Sastry, Amanda Askell, et~al.
\newblock Language models are few-shot learners.
\newblock \emph{Advances in neural information processing systems}, 33:\penalty0 1877--1901, 2020.

\bibitem[Chowdhery et~al.(2023)Chowdhery, Narang, Devlin, Bosma, Mishra, Roberts, Barham, Chung, Sutton, Gehrmann, et~al.]{chowdhery2023palm}
Aakanksha Chowdhery, Sharan Narang, Jacob Devlin, Maarten Bosma, Gaurav Mishra, Adam Roberts, Paul Barham, Hyung~Won Chung, Charles Sutton, Sebastian Gehrmann, et~al.
\newblock Palm: Scaling language modeling with pathways.
\newblock \emph{Journal of Machine Learning Research}, 24\penalty0 (240):\penalty0 1--113, 2023.

\bibitem[Bubeck et~al.(2023)Bubeck, Chandrasekaran, Eldan, Gehrke, Horvitz, Kamar, Lee, Lee, Li, Lundberg, et~al.]{bubeck2023sparks}
S{\'e}bastien Bubeck, Varun Chandrasekaran, Ronen Eldan, Johannes Gehrke, Eric Horvitz, Ece Kamar, Peter Lee, Yin~Tat Lee, Yuanzhi Li, Scott Lundberg, et~al.
\newblock Sparks of artificial general intelligence: Early experiments with gpt-4.
\newblock \emph{arXiv preprint arXiv:2303.12712}, 2023.

\bibitem[Radford et~al.(2019)Radford, Wu, Child, Luan, Amodei, Sutskever, et~al.]{radford2019language}
Alec Radford, Jeffrey Wu, Rewon Child, David Luan, Dario Amodei, Ilya Sutskever, et~al.
\newblock Language models are unsupervised multitask learners.
\newblock \emph{OpenAI blog}, 1\penalty0 (8):\penalty0 9, 2019.

\bibitem[Wei et~al.(2021)Wei, Bosma, Zhao, Guu, Yu, Lester, Du, Dai, and Le]{wei2021finetuned}
Jason Wei, Maarten Bosma, Vincent~Y Zhao, Kelvin Guu, Adams~Wei Yu, Brian Lester, Nan Du, Andrew~M Dai, and Quoc~V Le.
\newblock Finetuned language models are zero-shot learners.
\newblock \emph{arXiv preprint arXiv:2109.01652}, 2021.

\bibitem[Narayanan et~al.(2021)Narayanan, Shoeybi, Casper, LeGresley, Patwary, Korthikanti, Vainbrand, Kashinkunti, Bernauer, Catanzaro, et~al.]{narayanan2021efficient}
Deepak Narayanan, Mohammad Shoeybi, Jared Casper, Patrick LeGresley, Mostofa Patwary, Vijay Korthikanti, Dmitri Vainbrand, Prethvi Kashinkunti, Julie Bernauer, Bryan Catanzaro, et~al.
\newblock Efficient large-scale language model training on gpu clusters using megatron-lm.
\newblock In \emph{Proceedings of the International Conference for High Performance Computing, Networking, Storage and Analysis}, pages 1--15, 2021.

\bibitem[Sanh et~al.(2021)Sanh, Webson, Raffel, Bach, Sutawika, Alyafeai, Chaffin, Stiegler, Scao, Raja, et~al.]{sanh2021multitask}
Victor Sanh, Albert Webson, Colin Raffel, Stephen~H Bach, Lintang Sutawika, Zaid Alyafeai, Antoine Chaffin, Arnaud Stiegler, Teven~Le Scao, Arun Raja, et~al.
\newblock Multitask prompted training enables zero-shot task generalization.
\newblock \emph{arXiv preprint arXiv:2110.08207}, 2021.

\bibitem[Chung et~al.(2022)Chung, Hou, Longpre, Zoph, Tay, Fedus, Li, Wang, Dehghani, Brahma, et~al.]{chung2022scaling}
Hyung~Won Chung, Le~Hou, Shayne Longpre, Barret Zoph, Yi~Tay, William Fedus, Yunxuan Li, Xuezhi Wang, Mostafa Dehghani, Siddhartha Brahma, et~al.
\newblock Scaling instruction-finetuned language models.
\newblock \emph{arXiv preprint arXiv:2210.11416}, 2022.

\bibitem[Mishra et~al.(2021)Mishra, Khashabi, Baral, and Hajishirzi]{mishra2021cross}
Swaroop Mishra, Daniel Khashabi, Chitta Baral, and Hannaneh Hajishirzi.
\newblock Cross-task generalization via natural language crowdsourcing instructions.
\newblock \emph{arXiv preprint arXiv:2104.08773}, 2021.

\bibitem[Thoppilan et~al.(2022)Thoppilan, De~Freitas, Hall, Shazeer, Kulshreshtha, Cheng, Jin, Bos, Baker, Du, et~al.]{thoppilan2022lamda}
Romal Thoppilan, Daniel De~Freitas, Jamie Hall, Noam Shazeer, Apoorv Kulshreshtha, Heng-Tze Cheng, Alicia Jin, Taylor Bos, Leslie Baker, Yu~Du, et~al.
\newblock Lamda: Language models for dialog applications.
\newblock \emph{arXiv preprint arXiv:2201.08239}, 2022.

\bibitem[Ziegler et~al.(2019)Ziegler, Stiennon, Wu, Brown, Radford, Amodei, Christiano, and Irving]{ziegler2019fine}
Daniel~M Ziegler, Nisan Stiennon, Jeffrey Wu, Tom~B Brown, Alec Radford, Dario Amodei, Paul Christiano, and Geoffrey Irving.
\newblock Fine-tuning language models from human preferences.
\newblock \emph{arXiv preprint arXiv:1909.08593}, 2019.

\bibitem[Bai et~al.(2022{\natexlab{a}})Bai, Jones, Ndousse, Askell, Chen, DasSarma, Drain, Fort, Ganguli, Henighan, et~al.]{bai2022training}
Yuntao Bai, Andy Jones, Kamal Ndousse, Amanda Askell, Anna Chen, Nova DasSarma, Dawn Drain, Stanislav Fort, Deep Ganguli, Tom Henighan, et~al.
\newblock Training a helpful and harmless assistant with reinforcement learning from human feedback.
\newblock \emph{arXiv preprint arXiv:2204.05862}, 2022{\natexlab{a}}.

\bibitem[Christiano et~al.(2017)Christiano, Leike, Brown, Martic, Legg, and Amodei]{christiano2017deep}
Paul~F Christiano, Jan Leike, Tom Brown, Miljan Martic, Shane Legg, and Dario Amodei.
\newblock Deep reinforcement learning from human preferences.
\newblock \emph{Advances in neural information processing systems}, 30, 2017.

\bibitem[Stiennon et~al.(2020)Stiennon, Ouyang, Wu, Ziegler, Lowe, Voss, Radford, Amodei, and Christiano]{stiennon2020learning}
Nisan Stiennon, Long Ouyang, Jeffrey Wu, Daniel Ziegler, Ryan Lowe, Chelsea Voss, Alec Radford, Dario Amodei, and Paul~F Christiano.
\newblock Learning to summarize with human feedback.
\newblock \emph{Advances in Neural Information Processing Systems}, 33:\penalty0 3008--3021, 2020.

\bibitem[Kreutzer et~al.(2018)Kreutzer, Uyheng, and Riezler]{kreutzer2018reliability}
Julia Kreutzer, Joshua Uyheng, and Stefan Riezler.
\newblock Reliability and learnability of human bandit feedback for sequence-to-sequence reinforcement learning.
\newblock \emph{arXiv preprint arXiv:1805.10627}, 2018.

\bibitem[Ouyang et~al.(2022)Ouyang, Wu, Jiang, Almeida, Wainwright, Mishkin, Zhang, Agarwal, Slama, Ray, et~al.]{ouyang2022training}
Long Ouyang, Jeffrey Wu, Xu~Jiang, Diogo Almeida, Carroll Wainwright, Pamela Mishkin, Chong Zhang, Sandhini Agarwal, Katarina Slama, Alex Ray, et~al.
\newblock Training language models to follow instructions with human feedback.
\newblock \emph{Advances in Neural Information Processing Systems}, 35:\penalty0 27730--27744, 2022.

\bibitem[Ramamurthy et~al.(2022)Ramamurthy, Ammanabrolu, Brantley, Hessel, Sifa, Bauckhage, Hajishirzi, and Choi]{ramamurthy2022reinforcement}
Rajkumar Ramamurthy, Prithviraj Ammanabrolu, Kiant{\'e} Brantley, Jack Hessel, Rafet Sifa, Christian Bauckhage, Hannaneh Hajishirzi, and Yejin Choi.
\newblock Is reinforcement learning (not) for natural language processing?: Benchmarks, baselines, and building blocks for natural language policy optimization.
\newblock \emph{arXiv preprint arXiv:2210.01241}, 2022.

\bibitem[Bai et~al.(2022{\natexlab{b}})Bai, Kadavath, Kundu, Askell, Kernion, Jones, Chen, Goldie, Mirhoseini, McKinnon, et~al.]{bai2022constitutional}
Yuntao Bai, Saurav Kadavath, Sandipan Kundu, Amanda Askell, Jackson Kernion, Andy Jones, Anna Chen, Anna Goldie, Azalia Mirhoseini, Cameron McKinnon, et~al.
\newblock Constitutional ai: Harmlessness from ai feedback.
\newblock \emph{arXiv preprint arXiv:2212.08073}, 2022{\natexlab{b}}.

\bibitem[Schulman et~al.(2017{\natexlab{a}})Schulman, Wolski, Dhariwal, Radford, and Klimov]{schulman2017proximal}
John Schulman, Filip Wolski, Prafulla Dhariwal, Alec Radford, and Oleg Klimov.
\newblock Proximal policy optimization algorithms.
\newblock \emph{arXiv preprint arXiv:1707.06347}, 2017{\natexlab{a}}.

\bibitem[Bradley and Terry(1952)]{bradley1952rank}
Ralph~Allan Bradley and Milton~E Terry.
\newblock Rank analysis of incomplete block designs: I. the method of paired comparisons.
\newblock \emph{Biometrika}, 39\penalty0 (3/4):\penalty0 324--345, 1952.

\bibitem[Bong and Rinaldo(2022)]{bong2022generalized}
Heejong Bong and Alessandro Rinaldo.
\newblock Generalized results for the existence and consistency of the mle in the bradley-terry-luce model.
\newblock In \emph{International Conference on Machine Learning}, pages 2160--2177. PMLR, 2022.

\bibitem[Rafailov et~al.(2023)Rafailov, Sharma, Mitchell, Ermon, Manning, and Finn]{rafailov2023direct}
Rafael Rafailov, Archit Sharma, Eric Mitchell, Stefano Ermon, Christopher~D Manning, and Chelsea Finn.
\newblock Direct preference optimization: Your language model is secretly a reward model.
\newblock \emph{arXiv preprint arXiv:2305.18290}, 2023.

\bibitem[Levine et~al.(2020)Levine, Kumar, Tucker, and Fu]{levine2020offline}
Sergey Levine, Aviral Kumar, George Tucker, and Justin Fu.
\newblock Offline reinforcement learning: Tutorial, review, and perspectives on open problems.
\newblock \emph{arXiv preprint arXiv:2005.01643}, 2020.

\bibitem[Schulman et~al.(2017{\natexlab{b}})Schulman, Chen, and Abbeel]{schulman2017equivalence}
John Schulman, Xi~Chen, and Pieter Abbeel.
\newblock Equivalence between policy gradients and soft q-learning.
\newblock \emph{arXiv preprint arXiv:1704.06440}, 2017{\natexlab{b}}.

\bibitem[Nachum et~al.(2017)Nachum, Norouzi, Xu, and Schuurmans]{nachum2017bridging}
Ofir Nachum, Mohammad Norouzi, Kelvin Xu, and Dale Schuurmans.
\newblock Bridging the gap between value and policy based reinforcement learning.
\newblock \emph{Advances in neural information processing systems}, 30, 2017.

\bibitem[Azar et~al.(2023)Azar, Rowland, Piot, Guo, Calandriello, Valko, and Munos]{azar2023general}
Mohammad~Gheshlaghi Azar, Mark Rowland, Bilal Piot, Daniel Guo, Daniele Calandriello, Michal Valko, and R{\'e}mi Munos.
\newblock A general theoretical paradigm to understand learning from human preferences.
\newblock \emph{arXiv preprint arXiv:2310.12036}, 2023.

\bibitem[Mnih et~al.(2016)Mnih, Badia, Mirza, Graves, Lillicrap, Harley, Silver, and Kavukcuoglu]{mnih2016asynchronous}
Volodymyr Mnih, Adria~Puigdomenech Badia, Mehdi Mirza, Alex Graves, Timothy Lillicrap, Tim Harley, David Silver, and Koray Kavukcuoglu.
\newblock Asynchronous methods for deep reinforcement learning.
\newblock In \emph{International conference on machine learning}, pages 1928--1937. PMLR, 2016.

\end{thebibliography}

%%% Uncomment this section and comment out the \bibliography{references} line above to use inline references.
% \begin{thebibliography}{1}

% 	\bibitem{kour2014real}
% 	George Kour and Raid Saabne.
% 	\newblock Real-time segmentation of on-line handwritten arabic script.
% 	\newblock In {\em Frontiers in Handwriting Recognition (ICFHR), 2014 14th
% 			International Conference on}, pages 417--422. IEEE, 2014.

% 	\bibitem{kour2014fast}
% 	George Kour and Raid Saabne.
% 	\newblock Fast classification of handwritten on-line arabic characters.
% 	\newblock In {\em Soft Computing and Pattern Recognition (SoCPaR), 2014 6th
% 			International Conference of}, pages 312--318. IEEE, 2014.

% 	\bibitem{hadash2018estimate}
% 	Guy Hadash, Einat Kermany, Boaz Carmeli, Ofer Lavi, George Kour, and Alon
% 	Jacovi.
% 	\newblock Estimate and replace: A novel approach to integrating deep neural
% 	networks with existing applications.
% 	\newblock {\em arXiv preprint arXiv:1804.09028}, 2018.

% \end{thebibliography}

\newpage
\appendix
\onecolumn
\section{Mathematical Derivations}

\subsection{Proof of theorem \ref{theorem pm}}\label{a:proof_pm}
\begin{theorem}
Gradient of preference(reward) maximization objective \ref{max obj}
% (which is identical to preferences maximization term of IPO)
% can be optimized using off-policy gradient ascent algorithm,
can be estimated from $\mathcal{D}^p$
% Preference(Reward) maximization objective \ref{max obj}
% (which is identical to preferences maximization term of IPO)
% can be optimized using off-policy gradient ascent algorithm,
% and gradient can be estimated from $\mathcal{D}^p$
\begin{align*}
\nabla_\theta R(\pi^p_\theta)&=\underset{\langle x,y_w,y_l,I \rangle\sim\mathcal{D}^p}{\mathbb{E}}\Bigl[
    I\nabla_\theta\pi^p_\theta(y_w|x)
    +(1-I)\nabla_\theta\pi^p_\theta(y_l|x)
\Bigr].
\end{align*}
\end{theorem}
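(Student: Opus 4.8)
The plan is to start from the definition of the expected reward in \eqref{rl p} and compute its gradient directly, exploiting the key structural fact that the behavior policy $\bar\pi^p$ generating $\mathcal{D}^p$ is uniform on the two actions (see \eqref{pref policy}). First I would write
\[
R(\pi^p_\theta)=\underset{\substack{x\sim\rho\\\langle y_w,y_l\rangle\sim\mu(\cdot|x)}}{\mathbb{E}}\Bigl[\sum_{\mathfrak{a}\in\mathcal{A}}\pi^p_\theta(\mathfrak{a}|x,y_w,y_l)\,r^p(\mathfrak{a}|x)\Bigr],
\]
and then use \eqref{preference:reward}, which identifies $r^p(\mathfrak{a}|x)$ with the preference probability $p^*(\mathfrak{a}|x)$, i.e.\ $r^p(y_w|x)=p^*(y_w\succ y_l|x)$ and $r^p(y_l|x)=1-p^*(y_w\succ y_l|x)$. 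The essential importance-sampling step is to recognize that $p^*$ is exactly the Bernoulli mean of the label $I$ from \eqref{pstar}, so that $r^p(y_w|x)=\mathbb{E}[I\mid x,y_w,y_l]$ and $r^p(y_l|x)=\mathbb{E}[1-I\mid x,y_w,y_l]$; this lets me pull the reward inside the expectation over $I\sim\mathrm{Bernoulli}(p^*)$.

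Next I would take $\nabla_\theta$ through the (finite) sum over $\mathcal{A}$, which is legitimate since there are only two terms and $\pi^p_\theta$ is assumed smooth in $\theta$ while $r^p$ does not depend on $\theta$. This yields
\[
\nabla_\theta R(\pi^p_\theta)=\underset{\substack{x\sim\rho\\\langle y_w,y_l\rangle\sim\mu}}{\mathbb{E}}\Bigl[p^*(y_w\succ y_l|x)\,\nabla_\theta\pi^p_\theta(y_w|x)+\bigl(1-p^*(y_w\succ y_l|x)\bigr)\,\nabla_\theta\pi^p_\theta(y_l|x)\Bigr].
\]
Now I substitute $p^*=\mathbb{E}[I\mid\cdot]$ and $1-p^*=\mathbb{E}[1-I\mid\cdot]$, using that $\nabla_\theta\pi^p_\theta(\cdot|x)$ is measurable with respect to $(x,y_w,y_l)$ so it can be moved inside the conditional expectation over $I$. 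By the tower property, the nested expectation over $x\sim\rho$, $\langle y_w,y_l\rangle\sim\mu(\cdot|x)$, and $I\sim\mathrm{Bernoulli}(p^*)$ collapses into a single expectation over $\langle x,y_w,y_l,I\rangle\sim\mathcal{D}^p$, giving precisely
\[
\nabla_\theta R(\pi^p_\theta)=\underset{\langle x,y_w,y_l,I\rangle\sim\mathcal{D}^p}{\mathbb{E}}\bigl[I\,\nabla_\theta\pi^p_\theta(y_w|x)+(1-I)\,\nabla_\theta\pi^p_\theta(y_l|x)\bigr].
\]

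I expect the main subtlety — rather than a genuine obstacle — to be bookkeeping the conditioning structure correctly: namely justifying that $r^p(\mathfrak{a}|x)$, written as $r^p(\mathfrak{a}|x,y_w,y_l)$, equals the conditional mean of the Bernoulli label, and that replacing it by $I$ is valid precisely because we are computing an expectation (the unbiasedness is what makes the estimator off-policy-valid). One should also note where the uniform behavior policy $\bar\pi^p\equiv 1/2$ enters: it guarantees both actions $y_w\succ y_l$ and $y_l\succ y_w$ are ``observed'' for every preference pair, which is why both terms $\nabla_\theta\pi^p_\theta(y_w|x)$ and $\nabla_\theta\pi^p_\theta(y_l|x)$ appear with the reward weights $I$ and $1-I$ and no additional importance weight survives. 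A remark I would include: unlike standard policy-gradient, there is no $\log$-derivative (score-function) term here because we are differentiating $\sum_{\mathfrak a}\pi^p_\theta(\mathfrak a)\,r^p(\mathfrak a)$ directly as a finite sum rather than as an expectation under $\pi^p_\theta$ itself — the sum over the two-element action set is taken explicitly.
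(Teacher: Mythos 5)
Your proof is correct, but it takes a more elementary route than the paper. The paper frames the result as an importance-sampling correction to the REINFORCE estimator: it starts from the score-function gradient $\mathbb{E}_{\mathfrak{a}\sim\pi^p_\theta}[r^p(\mathfrak{a}|x)\nabla_\theta\log\pi^p_\theta(\mathfrak{a}|x)]$, reweights the expectation from $\pi^p_\theta$ to the uniform behavior policy $\Bar{\pi}^p\equiv 1/2$ via the ratio $\pi^p_\theta/\Bar{\pi}^p$, and then collapses $\pi^p_\theta\,\nabla_\theta\log\pi^p_\theta=\nabla_\theta\pi^p_\theta$ before substituting $r^p$ by the label. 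You instead bypass the score-function machinery entirely: you expand $R(\pi^p_\theta)$ as the explicit two-term sum $\sum_{\mathfrak{a}\in\mathcal{A}}\pi^p_\theta(\mathfrak{a}|x)\,r^p(\mathfrak{a}|x)$, differentiate it directly, and finish with the tower property to replace $r^p(\mathfrak{a}|x)=p^*(\mathfrak{a}|x)$ by the Bernoulli label $I$. Both arguments pass through the identical intermediate expression $\mathbb{E}\bigl[r^p(y_w|x)\nabla_\theta\pi^p_\theta(y_w|x)+r^p(y_l|x)\nabla_\theta\pi^p_\theta(y_l|x)\bigr]$, so they are algebraically equivalent; what your version buys is transparency (the finite action set makes the ``off-policy'' estimator an exact identity with no variance-reduction or support subtleties, and your closing remark correctly explains why no log-derivative term appears), while the paper's version buys consistency with its titular ``importance sampling view'' and makes explicit where the uniform behavior policy $\Bar{\pi}^p$ cancels the importance weight. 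Your handling of the $I$-substitution via conditional expectation and measurability of $\nabla_\theta\pi^p_\theta$ in $(x,y_w,y_l)$ is actually more careful than the paper's final line, which performs the same replacement without comment.
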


\begin{proof}
According to REINFORCE algorithm, policy gradient of the \ref{max obj} is:
\begin{equation}\label{pg gradient}
	\nabla_\theta R(\pi^p_\theta)= 
	\underset{\substack{x\sim\rho \\ \langle y_w,y_l\rangle \sim \mu(\cdot|x) \\ \mathfrak{a}\sim \pi^p_\theta(\cdot|x)}}
	{\mathbb{E}}[r^p(\mathfrak{a}|x)\nabla_\theta \log \pi^p_\theta(\mathfrak{a}|x)].
\end{equation}

Using importance sampling, gradient \ref{pg gradient} be expressed as:
\begin{equation}\label{off gradient}
    \begin{aligned}
	\nabla_\theta R(\pi^p_\theta)=&\underset{\substack{x\sim\rho \\ \langle y_w,y_l\rangle \sim \mu(\cdot|x) \\ \mathfrak{a}\sim \Bar{\pi}^p(\cdot|x)}}{\mathbb{E}}\Bigl[\frac{
		\pi^p_\theta(\mathfrak{a}|x)
	}{\Bar{\pi}^p(\mathfrak{a}|x)}r^p(\mathfrak{a}|x)\nabla_\theta\log \pi^p_\theta(\mathfrak{a}|x)\Bigr]\\
    =&\underset{\substack{x\sim\rho \\ \langle y_w,y_l\rangle \sim \mu(\cdot|x)}}{\mathbb{E}}\Bigl[
		r^p(y_w|x)\pi^p_\theta(y_w|x)\nabla_\theta\log \pi^p_\theta(y_w|x)
		+r^p(y_l|x)\pi^p_\theta(y_l|x)\nabla_\theta\log \pi^p_\theta(y_l|x)
    \Bigr]\\
    =&\underset{\substack{x\sim\rho \\ \langle y_w,y_l\rangle \sim \mu(\cdot|x)}}{\mathbb{E}}\Bigl[
		r^p(y_w|x)\nabla_\theta \pi^p_\theta(y_w|x)
		+r^p(y_l|x)\nabla_\theta \pi^p_\theta(y_l|x)
    \Bigr]\\
    =&\underset{\langle x,y_w,y_l,I \rangle\sim \mathcal{D}^p}{\mathbb{E}}\Bigl[
        I\nabla_\theta\pi^p_\theta(y_w|x) + (1-I)\nabla_\theta\pi^p_\theta(y_l|x)
    \Bigr].
    \end{aligned}
\end{equation}
% Gradient \ref{off gradient} can be calculated offline, and the corresponding algorithm is off-policy. 
% According to equation \ref{pref policy}, $\Bar{\pi}^p(\mathfrak{a}|x)=1/2,\forall \mathfrak{a}\in\mathcal{A}$,
% so gradient \ref{pg gradient} and \ref{off gradient} are identical. 
\end{proof}

\end{document}